\DeclareSymbolFont{rsfs}{U}{rsfs}{m}{n}
\DeclareSymbolFontAlphabet{\mathscrsfs}{rsfs}
\newtheorem{definition}{Definition}
\newtheorem{lemma}{Lemma}
\title{Data Deletion for Linear Regression with Noisy SGD}
\date{}
\author{Zhangjie, Xia \and Chi-Hua, Wang \and Guang, Cheng}
\begin{document}
\maketitle

\begin{abstract}
  In the current era of big data and machine learning, it's essential to find ways to shrink the size of training dataset while preserving the training performance to improve efficiency. However, the challenge behind it includes providing practical ways to find points that can be deleted without significantly harming the training result and suffering from problems like underfitting. We therefore present the perfect deleted point problem for 1-step noisy SGD in the classical linear regression task, which aims to find the perfect deleted point in the training dataset such that the model resulted from the deleted dataset will be identical to the one trained without deleting it. We apply the so-called signal-to-noise ratio and suggest that its value is closely related to the selection of the perfect deleted point. We also implement an algorithm based on this and empirically show the effectiveness of it in a synthetic dataset. Finally we analyze the consequences of the perfect deleted point, specifically how it affects the training performance and privacy budget, therefore highlighting its potential. This research underscores the importance of data deletion and calls for urgent need for more studies in this field.  
\end{abstract}

\section{INTRODUCTION}
Machine learning (ML) is a field of study in artificial intelligence concerned with the development of statistical algorithms that can learn from given data and generalize to unseen data, thus performing future tasks without explicit instructions. It has been playing an essential role in our everyday life, and has become one of the most rapidly growing research fields nowadays. \cite{jordan2015machine} reviewed the core methods and recent progress in this field and suggested that despite its remarkable success recently, it still emerges numerous research opportunities for many unsolved problems.

Data deletion \citep{garg2020formalizing,ginart2019making,oesterling2024fair} is one of the most intriguing areas among all those research questions. It's widely known that more training data tends to improve the training performance dramatically \citep{ramezan2021effects,foody2006training}. However, training with more data also has serious drawbacks. For instance, \cite{budach2022effects} empirically showed that data quality in training dataset can seriously affect the training performance of many widely used ML algorithms including regression, classification and clustering, therefore more training data has a higher risk of containing poisoning data points. \cite{narciso2020application} suggested that the massive amount of data used by the industry can cause serious energy efficiency related problems, thus increasing time and space complexity during training and boosting energy consumption. \cite{hawkins2004problem} pointed out that too much training data can make the ML model learn noise and unimportant patterns from given data, thus generalize poorly to unseen testing data, which is also referred to as overfitting. 

Additionally, many regulations have been established on behalf of the "Right to be Forgotten" in many countries and states, including \cite{gdpr2016}, and the recent California Consumer Privacy Act Right (CCPA) \citep{ccpa2024} allows users to require companies like Google, Facebook, Twitter, etc to forget and delete these personal data to protect their privacy. They can also ask these platforms to erase outdated information or limit the retention of personal details, such as photos and emails, to a specific time period. However, since personal data is often collected incrementally, the process of deleting it in chronological order poses significant challenges for machine learning models. This issue highlights the need for the development and analysis of deletion-aware online machine learning methods capable of handling such requests efficiently.

Therefore, since more training data doesn't necessarily lead to better result and may be prohibited in some cases, it's natural to ask the following question:
\begin{center}
    \textit{Can we use a smaller training dataset without sacrificing training accuracy?}
\end{center}
Data deletion, or machine unlearning \citep{cao2015towards,bourtoule2021machine}, addresses this problem by selecting certain training samples to be deleted which doesn't harm the training performance and results in identical final models. Further work proposed many solutions to this question in certain scenarios \citep{baumhauer2022machine,golatkar2020eternal,graves2021amnesiac} and we refer the readers to Section 2 for more details.

This paper proposes an innovative idea to the question of data deletion in one step of noisy stochastic gradient descent (SGD) for linear regression task. Section 2 and 3 presents related works and the basic problem formulation. Section 4 introduces our statistical method for solving data deletion problem and shows our algorithm correspondingly. Section 5 highlights the major consequences of our method, specifically how it affects empirical risk and model privacy. Section 6 empirically shows the use of perfect deleted point in the synthetic dataset. Section 7 summarizes the paper and suggests some possible future work.

\paragraph{Our Contributions:}
\begin{enumerate}
    \item Propose a hypothesis testing method to solve the data deletion problem in linear regression using noisy SGD.
    \item Show that the deletion of our perfect deleted point affects the training performance the least compared with other points potentially.
    \item Show the deletion of our perfect deleted point possibly posts the minimum privacy issue compared with other points.
    \item Empirically demonstrate the effectiveness and potential of perfect deleted point in the synthetic dataset.
\end{enumerate}

\section{RELATED LITERATURE}
\paragraph{Noisy gradient descent.}
Our research contributes to the study of machine learning security, particularly focusing on noisy gradient descent (Noisy GD) \citep{avella2023differentially,das2023beyond,wu2020noisy}. Noisy GD is best exemplified by DP-SGD, which is proposed by \cite{abadi2016deep}. The basic idea of it is to introduce noise into the gradient descent process to provide privacy guarantees during training. In the DP-SGD framework, each update is equipped with a privacy burden, which quantifies the privacy leakage in each step. There are many causes for the privacy leakage, and our research shows how the deletion of one data point in the training dataset can affect the privacy budget in Section 5.2. We conclude that there exists certain data point, which we call the perfect deleted point, that will cause least privacy budget than other points, and highlights potential future work on how the behaviour of training data can affect machine learning security.
\paragraph{Data deletion.}
The key of our research is to understand data deletion in a certain framework and provide a reliable approach to solve this problem. Despite the boosting progress in this field, we have found that most researches use singular metric, like accuracy, as the dominating factor to help them find points to be deleted \citep{wu2020deltagrad,guo2019certified,garg2020formalizing,neel2021descent,gupta2021adaptive}. Obviously this is \textbf{ad hoc}, leading to unreliable and inconsistent results. Our research address this issue by applying various metrics, namely the distribution of model weights, training loss and privacy budget, to verify that our perfect deleted point is indeed the "perfect" one to be deleted among all data points in the training dataset, thus enhancing the reliability of our method and providing a new standard for future researchers working in this domain. We believe our contribution will significantly improve the use of theoretical tools available for data deletion, fostering greater trust and robustness in this area.
 \paragraph{Membership inference attack.} 
 Our research approach is inspired by the current progress in the area of membership inference attack (MI attack). In such an attack, the malicious group wants to predict whether a certain data point belongs to the training dataset or the whole data distribution, given either final model output (black-box attack) or information about model structure in the whole training process (white-box attack) \citep{shokri2017membership}. While our initial goal is to find the perfect deleted point, equivalently we want to find a data point from the training dataset such that the adversary is most likely to fail to perform a MI attack for that point, i.e., the data point is somehow redundant and makes limited contribution to the training of the model. Many researchers have contributed to this problem from different perspectives, and we are inspired by \cite{leemann2023gaussianmembershipinferenceprivacy} and \cite{8429311}'s works specifically. \cite{8429311} analyzes MI attack using the concept of membership advantage, and inspired by this, we apply the adjusted concept of absolute membership advantage to find the perfect deleted point. \cite{leemann2023gaussianmembershipinferenceprivacy} analyzes MI attack from a hypothesis testing perspective, and similarly, we find the perfect deleted point with the smallest absolute membership advantage using likelihood ratio test. We believe the concept of membership advantage and the approach of hypothesis testing may yield more future work in the problem of data deletion.

Here we also recite two concepts from \cite{yeom2018privacyriskmachinelearning} to prepare the readers for our future discussion. The first one is about membership advantage.
\begin{definition}
    (Membership experiment $Exp^M(A, D)$ \citep{yeom2018privacyriskmachinelearning}): Let $A$ be an adversary, $S$ be the training dataset and $D$ be a probability distribution over data points $(x,y)$. The experiment proceeds as follows:
    \begin{enumerate}
        \item Sample $S \sim D^n$.
        \item Draw $b \in \{0,1\}$ uniformly at random.
        \item Draw $z \sim S$ if $b=0$, or $z \sim D$ if $b=1$.
        \item $A$ tries to predict value of $b$ and outputs either $0$ or $1$. If it makes the right prediction, i.e., its output is equal to $b$, then $Exp^M(A, D)=1$. Otherwise, $Exp^M(A, D)=0$.
    \end{enumerate}
\end{definition}
In other words, the adversary $A$ tries to predict whether data point $z$ is drawn from the training dataset or from the whole data distribution in this experiment. If it predicts correctly, then the experiment is successful and outputs 1; otherwise it fails and outputs 0. Based on the experiment, we further define membership advantage as follows:
\begin{definition}
    (Membership advantage \citep{yeom2018privacyriskmachinelearning}): The membership advantage of point $v$ is defined as
    \begin{center}
        $Adv(v)= Pr[A=0|b=0]-Pr[A=0|b=1]$.
    \end{center}
\end{definition}
Equivalently, the membership advantage can be understood as the difference between true negative rate (TNR), i.e., $Pr[A=0|b=0]$, and false negative rate (FNR), i.e., $Pr[A=0|b=1]$, of the membership experiment $Exp^M(A,D)$. Intuitively, if TNR is close to FNR, then it would be hard to tell whether $z$ is drawn from $S$ or $D$, which means the null and alternative hypothesis give similar probability distributions. We later leverage this idea to find the perfect deleted point.

Another important concept we need from \cite{yeom2018privacyriskmachinelearning} is $\epsilon$-differential privacy, where $\epsilon$ is called privacy budget and large $\epsilon$ means serious concerns about training data confidentiality.
\begin{definition}
    (Differential privacy \citep{yeom2018privacyriskmachinelearning}): An algorithm $A:X^n \to Y$ satisfies $\epsilon$-differential privacy if for all $S,S' \in X^n$ that differ in a single value, the following holds:
    \begin{equation}
        Pr[A(S) \in Y] \leq e^\epsilon Pr[A(S') \in Y].
    \end{equation}
\end{definition}
The following Lemma characterizes the connection between membership advantage and privacy budget, which will be used in Section 5.2.
\begin{lemma}
    (Upper bound of membership advantage for $\epsilon$-differentially private algorithm \citep{yeom2018privacyriskmachinelearning}) Let A be an $\epsilon$-differentially private algorithm, then we have
    \begin{equation}
        Adv(A) \leq e^{\epsilon}-1.
    \end{equation}
\end{lemma}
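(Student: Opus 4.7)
The plan is to leverage the $\epsilon$-differential privacy guarantee on $A$, together with the post-processing property, to control the difference between the two conditional probabilities that define the membership advantage. First I would bundle everything the adversary sees, namely the trained model $A(S)$ together with the challenge point $z$, into a single randomized procedure applied to the dataset $S$; since $A$ is $\epsilon$-differentially private, the composite map that outputs the adversary's decision is also $\epsilon$-differentially private by post-processing.

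Next, I would reinterpret the two branches of the membership experiment as a coupling over neighboring datasets. Under $b = 0$ the challenge point is drawn from $S$, and by exchangeability of the i.i.d.\ sample this is equivalent to first sampling $z \sim D$ and $S_{-i} \sim D^{n-1}$ and then inserting $z$ at a uniformly random index $i$. Under $b = 1$, keeping the same $z$ and $S_{-i}$, position $i$ is instead filled with an independent fresh sample $z' \sim D$. Thus, conditional on $(z, i, S_{-i})$, the two datasets supplied to the training procedure agree in $n-1$ coordinates and differ in exactly one, placing them squarely in the scope of Definition~3.

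Applying $\epsilon$-differential privacy to the event $\{A = 0\}$ on this neighboring pair gives
\begin{equation*}
\Pr[A = 0 \mid b = 0] \leq e^{\epsilon}\,\Pr[A = 0 \mid b = 1],
\end{equation*}
after which subtracting $\Pr[A = 0 \mid b = 1]$ from both sides and using $\Pr[A = 0 \mid b = 1] \leq 1$ yields
\begin{equation*}
Adv(A) = \Pr[A = 0 \mid b = 0] - \Pr[A = 0 \mid b = 1] \leq (e^{\epsilon} - 1)\,\Pr[A = 0 \mid b = 1] \leq e^{\epsilon} - 1,
\end{equation*}
which is the desired bound.

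The main obstacle is the coupling step. The membership experiment does not obviously fall under the single-coordinate neighboring relation of Definition~3, since $b = 0$ entangles $z$ with $S$ while $b = 1$ makes them independent. Making the exchangeability argument precise, so that the two branches can be realized on a common probability space with training datasets differing in exactly one entry, is the technical heart of the proof; once that reduction is in place, the DP inequality and the elementary algebra above deliver the bound.
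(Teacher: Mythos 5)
Your proposal is correct. Note that the paper itself offers no proof of this lemma: it is imported directly from the cited reference (Yeom et al.), and your argument is essentially the standard proof given there --- the exchangeability coupling that realizes the $b=0$ and $b=1$ branches of the membership experiment on datasets differing in a single coordinate, followed by post-processing to make the adversary's decision an $\epsilon$-differentially private function of $S$, and the elementary bound $(e^{\epsilon}-1)\Pr[A=0\mid b=1]\le e^{\epsilon}-1$. The only step worth spelling out is that the pointwise inequality $\Pr[\mathrm{adv}=0\mid S^{(0)},z]\le e^{\epsilon}\Pr[\mathrm{adv}=0\mid S^{(1)},z]$, which holds conditionally on the coupled variables $(z,i,S_{-i},z')$, must be averaged over the coupling to yield the inequality between the unconditional quantities $\Pr[A=0\mid b=0]$ and $\Pr[A=0\mid b=1]$; this is immediate since expectation preserves inequalities, and with it your reduction is complete.
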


\section{PROBLEM SETUP}
\subsection{Perfect deleted point problem}
Assume we are working on two different datasets: $D_0$, referred to as the \textit{original} dataset, and $D_1$, known as the \textit{deleted} dataset. Our goal is to find a point $v^* \in D_0$, which is called the \textit{perfect deleted point}, such that the deletion of it will cause the minimum change on the final model, i.e., the distribution of model weights resulted from deleting $v^*$ is identical to the one when not deleting it, compared with all the other points. 

However, it can be difficult and inefficient to directly find the perfect deleted point. Intuitively, for each data point $v_i \in D_0$, we need to delete it from the original dataset $D_0$ and use the deleted dataset $D_0 \setminus v_i$ to train a new ML model. Among all the $n$ models we get from deleting each $v_i$, we want to find the data point $v^* \in D_0$ such that using the deleted dataset $D_1=D_0 \setminus v^*$, we will obtain the most similar model, i.e., model weights, to the one trained with the original dataset $D_0$, compared with deleting all the other $n-1$ data points. We would need to repeat the same deleting-and-training process for $n$ times if we want to directly find the perfect deleted point, which is unnecessary as we propose an alternative approach in this paper that doesn't require any additional deleting-and-training at all. We will discuss it in Section 4.2 in detail.

We are motivated to find the perfect deleted point because previous studies have shown that despite improvement of training performance using a larger training dataset \citep{ramezan2021effects,foody2006training}, too much training data can cause overfitting if the model becomes too complex and learns noise or irrelevant patterns rather than generalizing well \citep{ying2019overview,hawkins2004problem}. Also, it will be more computationally-friendly and data-efficient if using a smaller training dataset \citep{acun2021understanding,zhu2016we}. 
\subsection{Assumptions}
 In this paper, we propose a novel method for finding the perfect deleted point with the following assumptions:
\begin{enumerate}
    \item The original dataset $D_0$ contains finitely many data points, i.e., $n$ is a positive integer that is not infinity.
    \item We are considering the classical linear regression task, i.e., point $v_i=(x_i,y_i) \in D_0$ where $x_i$ is the feature vector and $y_i$ is the label of corresponding $x_i$. Thus, the loss function of point $v_i$ for a ML model with model weights $w$ is 
    \begin{equation}
        l(w,v_i)=(y_i-\langle w,x_i \rangle)^2.
    \end{equation}
    The empirical risk given model weight being $w$ and dataset $D=\{(x_i,y_i)\}_{i=1}^n$ is defined as 
    \begin{equation}
        L(w;D)=\frac{1}{n}\sum_{i=1}^nl(w;(x_i,y_i)),
    \end{equation}
    which evaluates the discrepancy between predicted outcomes and true outcomes of the whole dataset. The mean is taken over all individual losses here because it's a common approach to use mean gradient in noisy SGD defined below.
    \item The ML model uses noisy gradient descent to minimize the empirical risk defined in equation (4) and update its weights in each step, i.e., \begin{equation}
        w_1=w_0-\gamma(\nabla L(w;D)+\eta)
    \end{equation} 
    where $\eta=N(0,\sigma^2I)$ is the Gaussian noise and $w_0,w_1$ are model weights before and after noisy gradient descent respectively. Note we only consider one step of noisy gradient descent in this paper.
\end{enumerate}

\section{KEY THEORETICAL RESULTS}
In Section 4, we present our theoretical approach on how to find the perfect deleted point $v^*$ from a hypothesis testing perspective and design a concrete algorithm to realize this idea.
\subsection{Notations}
 With the linear regression setting in Section 3.2, we first formally define how we can obtain the deleted dataset from the original dataset.
\begin{definition}
    (Original dataset and deleted dataset): Denote $D_0=\{(x_i,y_i)\}_{i=1}^n$ as the original training dataset and
    \begin{center}
        $D_1=D_0 \setminus \{v\}$
    \end{center}
    as the deleted training dataset with the deleted point $v=(x_v,y_v) \in D_0$.
\end{definition}
Given Definition 4 and equation (4), we next define the empirical risk and gradient for original and deleted dataset respectively.
\begin{definition}
    (Original empirical risk and its gradient): Given an original dataset $D_0$ defined in Definition 4. For a model weight being $w$, denote the individual loss of $w$ on training sample $(x_i,y_i)$ as $l(w;(x_i,y_i))$. Then the original empirical risk $L(w;D_0)$ is defined as 
    \begin{center}
        $L(w;D_0)=\frac{1}{n}\sum_{i=1}^nl(w;(x_i,y_i))$.
    \end{center}
    In addition, $\nabla_wL(w;D_0)$ is called the original gradient of model weight w on dataset $D_0$.
\end{definition}
\begin{definition}
    (Deleted empirical risk and its gradient): Given a deleted dataset $D_1$ defined in Definition 4. For a model weight being $w$, denote the individual loss of $w$ on training sample $(x_i,y_i)$ as $l(w;(x_i,y_i))$. Then the deleted empirical risk $L(w;D_1)$ is defined as 
    \begin{center}
        $L(w;D_1)=\frac{1}{n-1}[\sum_{i=1}^nl(w;(x_i,y_i))-l(w;(x_v,y_v)]$.
    \end{center}
    In addition, $\nabla_wL(w;D_1)$ is called the deleted gradient of model weight w on dataset $D_1$ with deleted point being $v$.
\end{definition}
Lemma 2 describes a basic relationship of original gradient and deleted gradient.
\begin{lemma}
    (Identity between original and deleted gradient): 
    \begin{equation}
        \nabla_wL(w;D_1)=\frac{n}{n-1}\nabla_wL(w;D_0)-\frac{1}{n-1}\nabla_wl(w;v).
    \end{equation}
\end{lemma}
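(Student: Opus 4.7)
The plan is to prove Lemma 2 by direct algebraic manipulation at the level of the empirical risks themselves, then invoke linearity of the gradient. Nothing beyond the definitions just given is needed, so the argument will be short.

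First, I would start from Definition 6, which expresses the deleted empirical risk as
\begin{equation*}
L(w;D_1)=\frac{1}{n-1}\Bigl[\sum_{i=1}^n l(w;(x_i,y_i))-l(w;(x_v,y_v))\Bigr].
\end{equation*}
Next, I would rewrite the sum over all $n$ training points using Definition 5: since $L(w;D_0)=\frac{1}{n}\sum_{i=1}^n l(w;(x_i,y_i))$, we have $\sum_{i=1}^n l(w;(x_i,y_i)) = n\,L(w;D_0)$. Substituting this into the previous display gives
\begin{equation*}
L(w;D_1)=\frac{n}{n-1}L(w;D_0)-\frac{1}{n-1}l(w;v).
\end{equation*}

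Finally, I would take $\nabla_w$ of both sides and use linearity of differentiation (the coefficients $n/(n-1)$ and $1/(n-1)$ do not depend on $w$) to obtain the claimed identity
\begin{equation*}
\nabla_wL(w;D_1)=\frac{n}{n-1}\nabla_wL(w;D_0)-\frac{1}{n-1}\nabla_wl(w;v).
\end{equation*}

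Honestly, there is no real obstacle here; the statement is essentially a restatement of the definitions after one line of bookkeeping. The only thing to be careful about is keeping track of the two different normalizations ($1/n$ for $D_0$ versus $1/(n-1)$ for $D_1$), which is exactly why the coefficient $n/(n-1)$ appears in front of $\nabla_w L(w;D_0)$ rather than a bare $1$. Because this lemma is meant to serve as a building block for later arguments (such as comparing the noisy-SGD updates under $D_0$ and $D_1$), I would keep the proof compact and just present the two displayed equalities above.
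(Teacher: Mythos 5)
Your proof is correct and follows exactly the route the paper intends: the paper simply states that the identity is a direct consequence of Definitions 5 and 6, and your two displayed equalities (rewriting $\sum_i l = n\,L(w;D_0)$ and then applying linearity of $\nabla_w$) are precisely the omitted bookkeeping. No issues.
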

\begin{proof}
    This identity is a direct consequence of Definition 5 and 6.
\end{proof}
\subsection{Strategy to find the perfect deleted point}
 Since we only consider one step of noisy gradient update here and the initial weight $w_0$ doesn't change whether or not we delete $v$, from equation (5) we know that in order to get similar model weights $w_1$ for original dataset $D_0$ and deleted dataset $D_1$, we need $\Delta w=w_1-w_0$ to be similar, i.e., have identical probability distributions. Equivalently, it means TNR and FNR considering the distributions of $D_0$ and $D_1$ should be close to each other, i.e., membership advantage of our perfect deleted point $v^*$ should be close to 0.
\begin{lemma}
    (Absolute membership advantage of $v$):
    \begin{equation}
        |Adv(v)|=|\Phi(\Phi^{-1}(1-\alpha)-d_v)-\alpha|
    \end{equation}
    where $\mu(D)=-\gamma\nabla_w L(w;D)$, $\sigma_\gamma=\gamma\sigma$, $d_v=\frac{||\mu(D_1)-\mu(D_0)||_2}{\sigma_\gamma}$ is the signal-to noise ratio for point $v$ and $\alpha$ is the Type I error.
\end{lemma}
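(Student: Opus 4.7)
The plan is to treat the one-step noisy SGD update as a Gaussian sufficient statistic and to recast the task of finding the perfect deleted point as a simple-versus-simple Gaussian hypothesis test between the two candidate generating mechanisms, training on $D_0$ and training on $D_1$.

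First, I would write out the conditional distribution of $\Delta w = w_1 - w_0 = -\gamma \nabla_w L(w_0;D) - \gamma \eta$ under each value of $b$. Since $\eta \sim \mathcal{N}(0,\sigma^2 I)$, we have $\Delta w \mid b=0 \sim \mathcal{N}(\mu(D_0), \sigma_\gamma^2 I)$ and $\Delta w \mid b=1 \sim \mathcal{N}(\mu(D_1), \sigma_\gamma^2 I)$, with $\mu(D)=-\gamma \nabla_w L(w_0;D)$ and $\sigma_\gamma=\gamma\sigma$. Lemma 2 then gives an explicit handle on the mean gap $\mu(D_1)-\mu(D_0)$ in terms of the single gradient $\nabla_w l(w_0;v)$, which is eventually what feeds into the numerator of $d_v$.

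Next, I would reduce the multivariate test to a scalar one. By Neyman--Pearson, the most powerful level-$\alpha$ test between two isotropic Gaussians with a common covariance is the likelihood-ratio test, which here is equivalent to thresholding the projection $T=\langle u,\Delta w\rangle$ along the unit direction $u=(\mu(D_1)-\mu(D_0))/\|\mu(D_1)-\mu(D_0)\|_2$. Standardizing $T$ produces a statistic that is $\mathcal{N}(0,1)$ under one hypothesis and $\mathcal{N}(d_v,1)$ under the other, so fixing the Type I error at $\alpha$ forces the threshold to be $\Phi^{-1}(1-\alpha)$ and leaves the Type II error equal to $\Phi(\Phi^{-1}(1-\alpha)-d_v)$. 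Because the two candidate laws share a covariance, this projection step loses no information and the one-dimensional calculation extends verbatim to the original vector problem.

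With the standardized test in hand, I would read off $\Pr[A=0 \mid b=0]$ and $\Pr[A=0 \mid b=1]$ directly from the standard-normal CDF: one is pinned to $\alpha$ by the level constraint and the other equals $\Phi(\Phi^{-1}(1-\alpha)-d_v)$ by the shift in mean. Substituting into Definition 2 and taking the absolute value produces the stated identity. The main obstacle, as I see it, is bookkeeping rather than technique: one must carefully decide which of $D_0,D_1$ plays the role of the null, in which tail the LRT rejects, and how $A=0$ versus $A=1$ line up with accepting or rejecting that null, so that the signed probabilities combine into exactly $\Phi(\Phi^{-1}(1-\alpha)-d_v)-\alpha$ inside the absolute value. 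Once these conventions are aligned, the rest is a routine Neyman--Pearson calculation together with elementary manipulations of $\Phi$.
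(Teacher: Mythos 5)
Your proposal is correct and follows essentially the same route as the paper: both model $\Delta w$ as isotropic Gaussians $\mathcal{N}(\mu(D_i),\sigma_\gamma^2 I)$, reduce to a scalar likelihood-ratio test (the paper uses $L(\Delta w)=W^T\Delta w\sim\mathcal{N}(\mp d_v^2/2,d_v^2)$, which is just the unstandardized form of your $\mathcal{N}(0,1)$ vs.\ $\mathcal{N}(d_v,1)$ statistic), fix the Type I error at $\alpha$, and obtain $|Adv(v)|=|TNR-FNR|=|\Phi(\Phi^{-1}(1-\alpha)-d_v)-\alpha|$. The only quibble is your phrase that $\Pr[A=0\mid b=0]$ is ``pinned to $\alpha$''---it is pinned to $1-\alpha$---but since you flag the sign bookkeeping explicitly and arrive at the correct final expression, this is a presentational slip rather than a gap.
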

\begin{proof}
    See Appendix A.1.
\end{proof}
From Lemma 3, we notice that the absolute membership advantage depends on Type I error $\alpha$ and signal-to-noise ratio $d_v$, and Type I error $\alpha$, which by definition is the false positive rate (FPR), i.e., the probability of rejecting null hypothesis when it's actually true, is therefore equivalent to our manually-picked significance level. So next we try to give an explicit formula for how to calculate signal-to-noise ratio $d_v$ by applying Lemma 2 to help us find the perfect deleted point.
\begin{lemma}
   (Signal-to-noise ratio $d_v$ for $v=(x_v,y_v) \in D_0$):
   \begin{equation}
       d_v=\frac{||y_vx_v-S_{yx}+S_{xx}w-x_vx_v^Tw||_2}{\sqrt{\frac{\gamma(n-1)}{2}}\sigma}
   \end{equation}
   where $S_{yx}=\frac{1}{n}\sum_{i=1}^ny_ix_i$, $S_{xx}=\frac{1}{n}\sum_{i=1}^nx_ix_i^T$
\end{lemma}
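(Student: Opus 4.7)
The plan is to derive $d_v$ directly from its definition $d_v = \|\mu(D_1) - \mu(D_0)\|_2/\sigma_\gamma$, compute the difference of mean vectors in closed form using Lemma 2, and then specialize to the linear-regression gradient so that everything is expressed in terms of $v$ and the sample statistics $S_{xx}, S_{yx}$. Since $\mu(D_j) = -\gamma\nabla_w L(w;D_j)$ and $\sigma_\gamma = \gamma\sigma$, the $\gamma$ in the numerator will partially cancel against $\sigma_\gamma$, which is how $\gamma$ comes to appear inside a square root rather than linearly in the final answer.

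First, I would apply Lemma 2 to rewrite the gradient difference. Substituting $\nabla_w L(w;D_1) = \frac{n}{n-1}\nabla_w L(w;D_0) - \frac{1}{n-1}\nabla_w l(w;v)$ and then subtracting $\nabla_w L(w;D_0)$ collapses the expression cleanly to
\[
\nabla_w L(w;D_1) - \nabla_w L(w;D_0) = \frac{1}{n-1}\bigl[\nabla_w L(w;D_0) - \nabla_w l(w;v)\bigr].
\]
This step isolates the dependence on $v$ in a single term and factors out the $1/(n-1)$ that will eventually surface in the SNR denominator.

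Next, I would specialize to the linear-regression loss. Differentiating $l(w;(x_i,y_i)) = (y_i - \langle w, x_i\rangle)^2$ yields $\nabla_w l(w;(x_i,y_i)) = 2(x_i x_i^\top w - y_i x_i)$, and averaging over $D_0$ gives $\nabla_w L(w;D_0) = 2(S_{xx}w - S_{yx})$. Substituting into the previous identity, the common factor of $2$ pulls out and one obtains
\[
\nabla_w L(w;D_1) - \nabla_w L(w;D_0) = \frac{2}{n-1}\bigl(S_{xx}w - S_{yx} - x_v x_v^\top w + y_v x_v\bigr),
\]
whose Euclidean norm, up to a sign that $\|\cdot\|_2$ absorbs, is exactly the numerator $\|y_v x_v - S_{yx} + S_{xx}w - x_v x_v^\top w\|_2$ appearing in the statement.

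The last step is to multiply by $\gamma$ to form $\|\mu(D_1)-\mu(D_0)\|_2$ and divide by $\sigma_\gamma = \gamma\sigma$. The main (and essentially only) obstacle here is constant bookkeeping: reconciling the factor of $2$ from the squared loss, the $1/(n-1)$ from the deletion, and the partial $\gamma/\sigma_\gamma$ cancellation so that the denominator assumes the stated form $\sqrt{\gamma(n-1)/2}\,\sigma$. This is a purely computational hurdle rather than a conceptual one, but it is where a sign or factor slip is most likely, so I would track the scalars symbolically through each line rather than collapsing constants early.
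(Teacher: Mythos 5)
Your derivation follows essentially the same route as the paper's own proof in Appendix A.2: express $d_v$ through the gradient difference, collapse that difference with Lemma 2 to $\frac{1}{n-1}\left[\nabla_w L(w;D_0)-\nabla_w l(w;v)\right]$, compute the squared-loss gradients $\nabla_w L(w;D_0)=2(S_{xx}w-S_{yx})$ and $\nabla_w l(w;v)=2(x_vx_v^Tw-y_vx_v)$, and assemble. All of those intermediate computations are correct and match the paper's equations (21)--(22).

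The gap is precisely the step you defer as ``constant bookkeeping'': it cannot be completed to produce the stated denominator. With the definitions given in Lemma 3, namely $\mu(D)=-\gamma\nabla_w L(w;D)$ and $\sigma_\gamma=\gamma\sigma$, your closed form yields
\begin{equation*}
d_v=\frac{\gamma\cdot\frac{2}{n-1}\left\|y_vx_v-S_{yx}+S_{xx}w-x_vx_v^Tw\right\|_2}{\gamma\sigma}=\frac{2}{(n-1)\,\sigma}\left\|y_vx_v-S_{yx}+S_{xx}w-x_vx_v^Tw\right\|_2,
\end{equation*}
in which $\gamma$ cancels entirely and the factors $2$ and $n-1$ appear linearly rather than under a square root. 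This does not equal the claimed $\left\|\cdot\right\|_2/\bigl(\sqrt{\gamma(n-1)/2}\,\sigma\bigr)$; the two differ by a factor of $\sqrt{(n-1)/(2\gamma)}$. The paper's own proof arrives at the stated form only by silently turning $\gamma/(\gamma\sigma)$ into $1/(\sqrt{\gamma}\sigma)$ in its equation (20), the prefactor $\frac{1}{n-1}$ into $\frac{1}{\sqrt{n-1}}$ in its equation (21), and the factor $2$ into $\sqrt{2}$ in its equation (23), none of which is justified by the stated definitions. So if you track the scalars symbolically as you propose, you will not confirm the lemma as written --- you will expose this discrepancy. To actually obtain the stated formula one would need a different normalization (e.g., noise scale $\sigma_\gamma^2=\gamma\sigma^2$ and a $\frac{1}{\sqrt{n-1}}$ rather than $\frac{1}{n-1}$ weighting), which neither the paper nor your proposal supplies.
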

\begin{proof}
    See Appendix A.2.
\end{proof}
From Lemma 3 and 4, we know we need to find data point $v$ such that $d_v$ calculated from it using equation (8) will minimize the absolute membership advantage $|Adv(v)|$. By simple calculation, we know $d_v$ should be close to $\Phi^{-1}(1-\alpha)-\Phi^{-1}(\alpha)=2\Phi^{-1}(1-\alpha)$ so that $|Adv(v)|$ will be minimized, i.e., close to 0. Therefore, the perfect deleted point problem can be reformulated as: we want to find point $v^* \in D_0$ such that
\begin{center}
    $v^*=\mathop{\arg\min}\limits_{v \in D_0}|d_v-2\Phi^{-1}(1-\alpha)|$
\end{center}

Next we formally propose Algorithm 1 to find the perfect deleted point in the original dataset $D_0$. To find the perfect deleted point, we need to first obtain signal-to-noise ratio $d_v$ for each data point $v \in D_0$ by applying Lemma 4. Then by Lemma 3, we choose a significance level and use it as the Type I error $\alpha$. Finally for each data point $v$, we compare the distance between $d_v$ and $2\Phi^{-1}(1-\alpha)$, and the one with the smallest distance gives the perfect deleted point $v^*$. Note we introduce a hyper-parameter $\delta$ here to quantify the tolerance we can have for the largest acceptable distance, i.e., if the smallest distance $|d_v-2\Phi^{-1}(1-\alpha)|$ for all $v \in D_0$ is bigger than $\delta$, then we say there's no perfect deleted point in the dataset. So now instead of directly training $n$ models separately using deleted datasets and comparing their model weights to the original model weights for each data point, our approach only needs to compute signal-to-noise ratio $d_v$ and compares its value to $2\Phi^{-1}(1-\alpha)$ for all data points in one training step, which only takes $O(n)$ time and $O(n)$ space for one step of noisy SGD training, where $n$ is the size of the original dataset. Thus our approach for finding the perfect deleted point is more time and space efficient than the direct approach. 

\begin{algorithm}
\caption{Find Perfect Deleted Point}
\label{alg:Find Perfect Deleted Point}
\begin{algorithmic}[1]
\State \textbf{Input:} $D_0=\{v_i\}_{i=1}^n$, $\gamma$, $\sigma^2$, initial weight $w$, tolerance $\delta$, significance level $\alpha'$
\State \textbf{Output:} $v^*$
\State $S_{yx}, S_{xx} \leftarrow \frac{1}{n}\sum_{i=1}^ny_ix_i, \frac{1}{n}\sum_{i=1}^nx_ix_i^T$
\State $d \leftarrow \delta$
\State $\alpha \leftarrow \alpha'$
\State $v^* \leftarrow \texttt{None}$
\For{\texttt{each} $v=(x_v,y_v) \in D_0$}
    \State $d_v \leftarrow \frac{||y_vx_v-S_{yx}+S_{xx}w-x_vx_v^Tw||_2}{\sqrt{\frac{\gamma(n-1)}{2}}\sigma}$ 
    \State $d_1 \leftarrow |d_v-2\Phi^{-1}(1-\alpha)|$
    \If{$d_1 <= d$}
        \State $d \leftarrow d_1$
        \State $v^* \leftarrow v$
    \EndIf
\EndFor
\State \Return $v^*$
\end{algorithmic}
\end{algorithm}

\section{MAIN INSIGHTS}
In Section 5, we further discuss how the perfect deleted point will affect the training performance (Section 5.1) and how it will cause privacy concern in the training process (Section 5.2). It provides insightful analysis on why the perfect deleted point should be of interest.
\subsection{Analysis of training loss for perfect deleted point}
In order to understand how perfect deleted point will affect the training performance, we first propose the following concept of absolute membership error that quantifies how different a point $v$ is from being the perfect deleted point.
\begin{definition}
    (Absolute membership error): For point $v \in D_0$ with corresponding signal-to-noise ratio $d_v$, we define its absolute membership error as
    \begin{equation}
        \epsilon_v = d_v - 2\Phi^{-1}(1-\alpha).
    \end{equation}
\end{definition}
Based on this definition, absolute membership error of the perfect deleted point $\epsilon_{v^*}$ should be close to 0 by analysis in Section 4.2. From now on, we denote $\epsilon_{v^*}$ by $\epsilon^*$ for brevity. We propose the following Lemma to bound the change of training loss for deleting point $v$.
\begin{lemma}
    (Bounds for change of empirical risk for deleting point $v$):
    Assume change of empirical risk, i.e., $L(w;D_1)-L(w;D_0)$, is non-negative.
    \begin{enumerate}
        \item Given point $v=(x_v,y_v)$ and its absolute membership error $\epsilon_v$, change of empirical risk after deleting $v$ is lower bounded by 
    \begin{equation}
        \frac{1}{n-1}L(w;D_0)-[\epsilon_v+2\Phi^{-1}(1-\alpha)]C-\frac{||S_{yx}-S_{xx}w||_2}{(n-1)||x_{v}||_2}
    \end{equation}
    and upper bounded by 
    \begin{equation}
        \frac{1}{n-1}L(w;D_0)+[\epsilon_v+2\Phi^{-1}(1-\alpha)]C-\frac{||S_{yx}-S_{xx}w||_2}{(n-1)||x_{v}||_2}
    \end{equation}
    where $C=\frac{\sigma}{||x_{v}||_2}\sqrt{\frac{\gamma}{2(n-1)}}$.
    \item Suppose features of the training data $\{(x_i,y_i)\}_{i=1}^n$ has the following property: $||x_i||_2 \geq B, \forall i \in [1,n]$ where $B$ is a positive number. Then the lower bound can be generalized to
    \begin{equation}
        \frac{1}{n-1}L(w;D_0)-[\epsilon_v+2\Phi^{-1}(1-\alpha)]D-\frac{||S_{yx}-S_{xx}w||_2}{(n-1)B}
    \end{equation}
    and the upper bound can be generalized to 
    \begin{equation}
        \frac{1}{n-1}L(w;D_0)+[\epsilon_v+2\Phi^{-1}(1-\alpha)]D-\frac{||S_{yx}-S_{xx}w||_2}{(n-1)B}
    \end{equation}
    where $D=\frac{\sigma}{B}\sqrt{\frac{\gamma}{2(n-1)}}$.
    \end{enumerate}
    Note both bounds in the first and second part of this Lemma are non-negative because deletion of samples always increases training risk. 
\end{lemma}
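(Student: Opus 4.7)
The plan is to reduce the change in empirical risk to a single-point quantity involving $v$, then bound that quantity by invoking Lemma 4. Starting from Definitions 5 and 6, I would first verify the elementary decomposition
\[
L(w; D_1) - L(w; D_0) \;=\; \frac{1}{n-1}L(w; D_0) - \frac{1}{n-1}l(w; v),
\]
obtained by combining the coefficient identity $\tfrac{1}{n-1}-\tfrac{1}{n} = \tfrac{1}{n(n-1)}$ with $\sum_{i=1}^n l(w;(x_i,y_i)) = n\,L(w;D_0)$. After this reduction, the whole task collapses to producing matching upper and lower estimates on $l(w;v)/(n-1)$ in terms of $d_v$.

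Next I would invoke Lemma 4 and rewrite its numerator in the factored form $x_v(y_v - \langle w, x_v\rangle) - (S_{yx} - S_{xx}w)$, which isolates the residual at $v$. Applying the triangle inequality and its reverse to this norm and dividing by $\sqrt{\gamma(n-1)/2}\,\sigma$ yields the two-sided estimate
\[
d_v\,C - \frac{\|S_{yx}-S_{xx}w\|_2}{(n-1)\|x_v\|_2} \;\leq\; \frac{|y_v - \langle w, x_v\rangle|}{n-1} \;\leq\; d_v\,C + \frac{\|S_{yx}-S_{xx}w\|_2}{(n-1)\|x_v\|_2},
\]
with $C$ exactly as in the statement. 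Substituting $d_v = \epsilon_v + 2\Phi^{-1}(1-\alpha)$ from Definition 8 and inserting these estimates into the decomposition above --- remembering that an upper bound on $l(w;v)$ becomes a lower bound on the change of risk, and vice versa --- reproduces the bounds (10) and (11). For Part 2, I would then apply the uniform condition $\|x_i\|_2 \geq B$ to absorb the $x_v$-dependence of $C$ into the constant $D = (\sigma/B)\sqrt{\gamma/(2(n-1))}$ and of the correction term into $\|S_{yx}-S_{xx}w\|_2/((n-1)B)$, taking care with the inequality signs so that each replacement weakens the bound in the correct direction to land on (12) and (13).

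The main obstacle I anticipate is bridging the mismatch between $l(w; v) = (y_v - \langle w, x_v\rangle)^2$ from equation (3) and the \emph{linear} quantity $|y_v - \langle w, x_v\rangle|$ that the triangle inequality on $d_v$ naturally controls. Either the stated bounds should be read as controlling $\sqrt{l(w;v)}/(n-1)$ rather than $l(w;v)/(n-1)$ --- i.e.\ an absolute-residual surrogate is implicit in this lemma --- or an additional squaring step with careful tracking of cross terms is required. The non-negativity hypothesis on the change of empirical risk presumably enters precisely here, letting one discard an absolute value when passing between the residual magnitude and its square, after which the algebraic substitutions above close out both parts of the lemma.
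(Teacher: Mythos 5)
Your route is the same as the paper's: the decomposition $L(w;D_1)-L(w;D_0)=\frac{1}{n-1}L(w;D_0)-\frac{1}{n-1}l(w;v)$, followed by factoring the numerator of $d_v$ from Lemma 4 as $x_v(y_v-x_v^Tw)-(S_{yx}-S_{xx}w)$, a triangle-inequality step, the substitution $d_v=\epsilon_v+2\Phi^{-1}(1-\alpha)$, and division by $n-1$. Two remarks. First, a sign slip in your intermediate display: to land on the stated upper bound of part 1 you need the residual's \emph{lower} estimate to be $\frac{\|S_{yx}-S_{xx}w\|_2}{(n-1)\|x_v\|_2}-d_v C$, with both endpoints symmetric about the $S$-term, as follows from $\bigl|\,\|a\|_2-\|b\|_2\,\bigr|\le\|a-b\|_2$ alone; your version $d_v C-\frac{\|S_{yx}-S_{xx}w\|_2}{(n-1)\|x_v\|_2}$ is also a valid lower estimate (from the forward triangle inequality) but it produces an upper bound on the change of risk with the opposite signs on the last two terms, which is not the one the lemma states. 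Second, the obstacle you flag at the end is real, and it is precisely where the paper's own proof is unsound rather than something you failed to see: the appendix writes $l(w;v)=y_v-\langle x_v,w\rangle$ with no square in the decomposition step and later substitutes $\|y_v-x_v^Tw\|_2=l(w,v)$, i.e., it silently identifies the squared loss of equation (3) with the absolute residual. No squaring step or cross-term analysis appears there, and the non-negativity hypothesis on $L(w;D_1)-L(w;D_0)$ does not bridge that gap. So your diagnosis is correct: as actually proved, the bounds control the absolute-residual surrogate $|y_v-\langle w,x_v\rangle|/(n-1)$ rather than $l(w;v)/(n-1)$.
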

\begin{proof}
    See Appendix B.1.
\end{proof}
The novel part of Lemma 5 is that given several candidates for the perfect deleted point, i.e.,  they all give very similar $d_v$, the one with the smallest $l_2$ feature norm, i.e., $||x_v||_2$, is always better because the lower bound of change of training loss is decreased if we decrease $||x_{v}||_2$. The upper bound is also decreased if we decrease $||x_{v}||_2$ because individual loss $l(w;v)$ is non-negative, meaning the part after $\frac{1}{n-1}L(w;D_0)$, which is $-\frac{1}{n-1}l(w;v)$, is therefore negative. So decreasing the denominator $||x_v||_2$ will decrease the upper bound as well. In other words, deleting the one with the smallest $l_2$ feature norm will improve training performance the most potentially because it has  smaller both lower and upper bound for increment of empirical risk.

Also, given the same training dataset, i.e., same $B$, the perfect deleted point gives the tightest bound for change of loss after deletion compared with other points because $\epsilon^* \leq \epsilon_v$. In other words, we can have an accurate estimate of how the perfect deleted point will affect training performance by setting $\epsilon^* = 0$ before even finding the exact perfect deleted point.
\subsection{Relation of privacy budget $\epsilon$ for perfect deleted point}
In this section, we try to understand how the perfect deleted point will bring about privacy issues in the training process.
\begin{lemma}
    (Lower bound for privacy budget for deleting point $v$): Privacy budget $\epsilon$ after deleting point $v$ with absolute membership error $\epsilon_v$ is at least $max\{ln[\Phi(\Phi^{-1}(\alpha)-\epsilon_v)+1-\alpha],0\}$.
\end{lemma}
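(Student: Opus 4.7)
The plan is to combine the membership-advantage upper bound of Lemma 1 with the explicit formula for $Adv(v)$ that underlies Lemma 3, and then reparameterize via the absolute membership error $\epsilon_v$ of Definition 7.

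First, I would invoke Lemma 1 in its rearranged form: since $Adv(A) \le e^{\epsilon}-1$ for any $\epsilon$-differentially private algorithm, we have $\epsilon \ge \ln\!\bigl(1+Adv(A)\bigr)$ whenever the argument of the logarithm is positive. Because the privacy budget $\epsilon$ is non-negative by definition, this can be written uniformly as $\epsilon \ge \max\{\ln(1+Adv(v)),\,0\}$, where the $\max$ absorbs the case in which the signed advantage is negative (so the logarithm would be negative) and also the case in which the argument would be below $1$.

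Next, I would pull out the signed (not absolute) advantage from the derivation behind Lemma 3. The Appendix A.1 argument effectively produces $Adv(v)=\Phi\!\bigl(\Phi^{-1}(1-\alpha)-d_v\bigr)-\alpha$ before absolute values are taken. Substituting the relation $d_v=\epsilon_v+2\Phi^{-1}(1-\alpha)$ from Definition 7 and using the symmetry identity $\Phi^{-1}(1-\alpha)=-\Phi^{-1}(\alpha)$ for the standard normal, the argument of $\Phi$ collapses to $\Phi^{-1}(\alpha)-\epsilon_v$. Therefore $1+Adv(v)=\Phi\!\bigl(\Phi^{-1}(\alpha)-\epsilon_v\bigr)+1-\alpha$, and plugging back into the inequality from the first step gives the stated lower bound.

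The main obstacle I expect is bookkeeping around the sign/absolute-value convention rather than any hard analysis: Lemma 3 is stated with $|Adv(v)|$ on the left, whereas Lemma 1 bounds the signed $Adv$, so one must be careful that the quantity appearing inside the $\ln$ is the signed advantage plus one and argue that the $\max\{\cdot,0\}$ correctly handles the regime $\epsilon_v>0$ where $\Phi(\Phi^{-1}(\alpha)-\epsilon_v)<\alpha$ (so the log is negative and the trivial bound $\epsilon\ge 0$ takes over). Once that is explicitly tracked, the remaining manipulations are simple substitutions and an application of normal-CDF symmetry, and nothing additional about the noisy SGD mechanism needs to be invoked beyond what Lemmas 1 and 3 already encode.
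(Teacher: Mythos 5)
Your proposal matches the paper's own proof essentially step for step: rearrange Lemma 1 to $\epsilon \ge \ln(1+Adv(v))$, substitute the signed advantage $\Phi(\Phi^{-1}(1-\alpha)-d_v)-\alpha$ from the Lemma 3 derivation, apply $d_v=\epsilon_v+2\Phi^{-1}(1-\alpha)$ together with $\Phi^{-1}(1-\alpha)=-\Phi^{-1}(\alpha)$, and take the maximum with $0$ from the non-negativity of the privacy budget. Your explicit attention to the signed-versus-absolute advantage is a point the paper glosses over, but it does not change the argument.
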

\begin{proof}
    See Appendix B.2. 
\end{proof}
From Lemma 6, we observe that assume privacy budget must be non-negative, then as long as the absolute membership error $\epsilon_v$ is non-negative, it can achieve the smallest lower bound $0$ for privacy budget. In other words, if we have several candidates for the perfect deleted point, i.e., they all give very similar $d_v$, then the one with positive absolute membership error will be considered as the best option for the perfect deleted point because it may potentially distort privacy guarantees to the minimum degree compared with deleting other points. Note we don't know exactly whether it'll cause the smallest privacy budget because we only know it can achieve the smallest lower bound for privacy budget, so only theoretically can it lead to a smaller privacy budget.

\section{EXPERIMENT}
Section 6 exemplifies the effectiveness our approach by conducting an empirical experiment to find the perfect deleted point in a synthetic dataset. By comparing the model weights in different scenarios, we show that our perfect deleted point can indeed change the distribution of model weights to the minimal extent.

\subsection{Experiment setup}
To better visualize how the perfect deleted point will affect the model weights, we generate a 2D synthetic dataset with 200 samples in this case as the original dataset $D_0$, where the $x$ coordinate of all these points is randomly picked within the range $[0,10]$ with seed $40$. To get the $y$ coordinate, we assume the true model behind our generated dataset is $y=3.1415926535x$ since we are dealing with classic linear regression here. Then we use Gaussian noise with mean $0$ and standard deviation $2$, amplifies its magnitude by $10$, and add it to the true $y$ coordinate to generate the synthetic $y$ coordinate. The scatterplot of our generated dataset looks like Figure 1. Although this is a simple dataset that may not mimic the real world dataset, it suffices to analyze the perfect deleted point problem on it in this paper as it satisfies all the assumptions in the given linear regression scenario.
\begin{figure}[h]
\centering
\includegraphics[width=\linewidth]{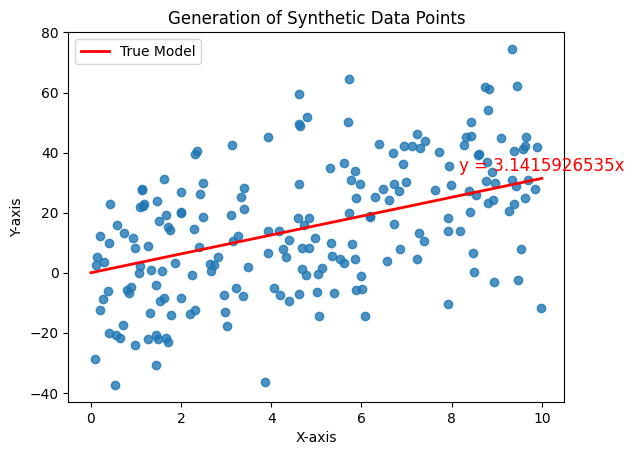}
\caption{Synthetic Dataset}
\end{figure}

The hyperparameters we used in this experiment are listed here: initial weight $w_0=0$, learning rate $\gamma =0.01$, Gaussian noise $\eta=N(0,4)$, Type I error $\alpha=0.01$, and tolerance $\delta=100$. Since this is a stochastic algorithm and we want to get the distribution of weights, we run the 1-step noisy SGD for 100 iterations to understand the overall behaviour of it.

\subsection{Result analysis}

To understand the effect of perfect deleted point, we first focus on the distribution of model weights after deleting it and compare it with the distribution of model weights when we don't delete any point. We use the distribution of model weights when we randomly delete a data point as a benchmark to better demonstrate the potential of perfect deleted point compared with all the other points. Secondly, we repeat the same finding-and-deleting process for multiple steps of noisy SGD to augment the effect of the perfect deleted point as deletion of one data point may not affect much compared with no deletion and random deletion because of the size of original dataset $D_0$ mitigates the impact.
\begin{figure}[ht]
\centering
\includegraphics[width=\linewidth, height=9cm]{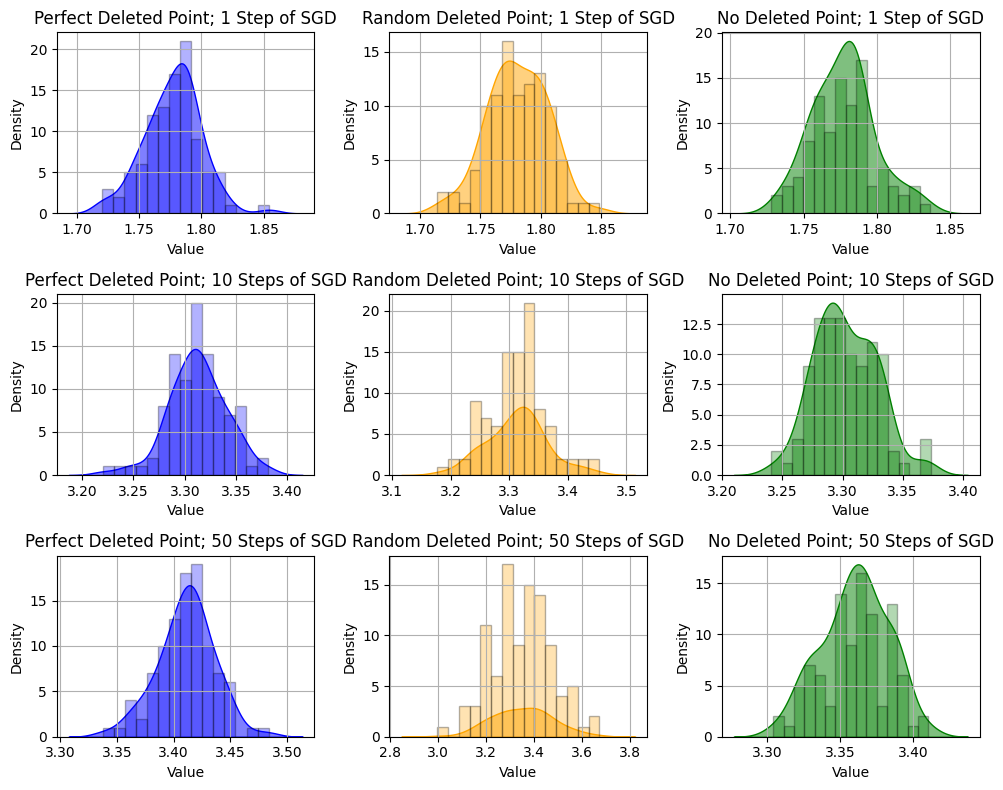}
\caption{Distribution of model weights after 1, 10, 50 steps of noisy SGD using perfect deleted point, randomly deleted point and no deleted point in 100 iterations. The histogram plots the occurrence of model weights in each bin, and the shaded area is the kernel density estimate (KDE) of the distribution.}
\end{figure}

\begin{table*}[ht]
\caption{Statistics for Distributions of Model Weight} \label{Mean of Model Weight}
\begin{center}
\begin{tabular}{lll}
\textbf{MODEL}  &\textbf{MEAN}  &\textbf{VARIANCE}\\
\hline \\
1-step; perfect deleted point         &1.77693  &0.00051\\
1-step; random deleted point         &1.78043  &0.00061\\
1-step; no deleted point         &1.77718  &0.00046\\ \hline
10-step; perfect deleted point         &3.31226  &0.00079\\
10-step; random deleted point         &3.31290  &\textcolor{red}{0.00265}\\
10-step; no deleted point         &3.30248  &0.00066\\ \hline
50-step; perfect deleted point         &\textcolor{red}{3.41079}  &0.00063\\
50-step; random deleted point         &3.35108  &\textcolor{red}{0.01560}\\
50-step; no deleted point         &3.36008  &0.00051\\
\end{tabular}
\end{center}
\end{table*}

Figure 2 and Table 1 demonstrate the distribution of model weights in different cases both visually and statistically. From the first row Figure 2, we see in one step of noisy SGD, all three cases demonstrate a similar normal distribution, which is proved by the mean and variance of the first three lines in Table 1. It is because the deletion of a single data point in $D_0$ which contains 200 samples only posts a trivial effect on the distribution of model weights. From the next two rows of Figure 2, we see that as the number of noisy SGD steps increases, the weight distribution of perfect deleted point resembles much more closely to no deleted point compared with random deleted point. This means that our perfect deleted point is indeed “perfect” in the sense of preserving model weight after deletion compared with other points. When we look at the highlighted numbers in the third column of Table 1, we can see that as number of noisy SGD steps increases, the variance of distribution of model weights using random deleted point grows larger, which means the normal distribution becomes "flatter" and we are more likely to get random weights after more random deletion. But by the convergence of noisy SGD, we know this is not acceptable, which again shows the advantage of perfect deleted point.


However, if we look at the highlighted number in the second column of Table 1, we see that despite having small variance, the mean of distribution of model weights after 50 steps using perfect deleted point shifts tremendously compared with both no and random deleted point, which gives a slightly different model weight distribution compared with no deleted point. To address this problem, we increase the Type I error $\alpha$ to 0.05, and Figure 3 plots the corresponding result. 
\begin{figure}[h]
\centering
\includegraphics[width=\linewidth, height=8.2cm]{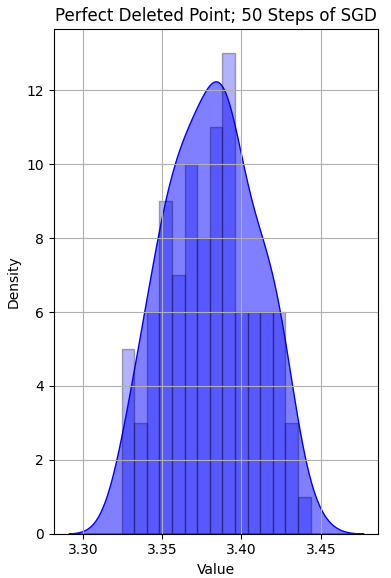}
\caption{Distribution of model weights after 50 steps of noisy SGD using perfect deleted point when $\alpha=0.05$, with mean = 3.38091 and variance = 0.00077.}
\end{figure}
As we can see, the mean becomes much smaller and the whole distribution is therefore much similar to the one with no deleted point. An intuitive explanation behind it may be that as we increase Type I error $\alpha$, we increase significance level as well, which allows us more room to make mistakes using perfect deleted point in many steps of noisy SGD, i.e., deleting more points, thus improving statistical power of our perfect deleted point. This trade-off between $\alpha$ and mean may lead to potential future work and needs more rigorous investigation.

\section{CONCLUSION}
In conclusion, our research proposes the perfect deleted point problem in one step of noisy SGD for classical linear regression. We present a hypothesis testing approach to solve this problem using signal-to-noise ratio and construct an algorithm based on it. Moreover, we analyze how the perfect deleted point is related to training performance and differential privacy, thus showing the consequence of the perfect deleted point and further demonstrating the importance of this problem. Finally, we create a synthetic dataset and implement our algorithm on it and show the minimum impact perfect deleted point will have on the distribution of model weights. The implications of our work demonstrate the urgent necessity for strategies of data deletion in different scenarios to improve model training efficiency. Moving forward, our research sets a new standard for understanding and analyzing such problems, ensuring the efficiency and security for training of ML models. Some potential future work includes analyzing data deletion for the whole SGD training process and for the general regression task.

\renewcommand{\bibname}{References}
\renewcommand{\bibsection}{\section*{\bibname}}
\bibliographystyle{unsrtnat}
\nocite{*}
\bibliography{ref.bib}

\section*{Checklist}
 \begin{enumerate}

 \item For all models and algorithms presented, check if you include:
 \begin{enumerate}
   \item A clear description of the mathematical setting, assumptions, algorithm, and/or model. [Yes] See Section 4
   \item An analysis of the properties and complexity (time, space, sample size) of any algorithm. [Yes] See Section 4
   \item (Optional) Anonymized source code, with specification of all dependencies, including external libraries. [Yes] See Appendix
 \end{enumerate}

 \item For any theoretical claim, check if you include:
 \begin{enumerate}
   \item Statements of the full set of assumptions of all theoretical results. [Yes] See Section 3
   \item Complete proofs of all theoretical results. [Yes] See Appendix
   \item Clear explanations of any assumptions. [Yes] See Section 3     
 \end{enumerate}

 \item For all figures and tables that present empirical results, check if you include:
 \begin{enumerate}
   \item The code, data, and instructions needed to reproduce the main experimental results (either in the supplemental material or as a URL). [Yes] See Appendix
   \item All the training details (e.g., data splits, hyperparameters, how they were chosen). [Yes] See Section 6
         \item A clear definition of the specific measure or statistics and error bars (e.g., with respect to the random seed after running experiments multiple times). [Yes] See Section 6
         \item A description of the computing infrastructure used. (e.g., type of GPUs, internal cluster, or cloud provider). [Not Applicable]
 \end{enumerate}

 \item If you are using existing assets (e.g., code, data, models) or curating/releasing new assets, check if you include:
 \begin{enumerate}
   \item Citations of the creator If your work uses existing assets. [Not Applicable]
   \item The license information of the assets, if applicable. [Not Applicable]
   \item New assets either in the supplemental material or as a URL, if applicable. [Not Applicable]
   \item Information about consent from data providers/curators. [Not Applicable]
   \item Discussion of sensible content if applicable, e.g., personally identifiable information or offensive content. [Not Applicable]
 \end{enumerate}

 \item If you used crowdsourcing or conducted research with human subjects, check if you include:
 \begin{enumerate}
   \item The full text of instructions given to participants and screenshots. [Not Applicable]
   \item Descriptions of potential participant risks, with links to Institutional Review Board (IRB) approvals if applicable. [Not Applicable]
   \item The estimated hourly wage paid to participants and the total amount spent on participant compensation. [Not Applicable]
 \end{enumerate}

 \end{enumerate}

\clearpage
\appendix
\onecolumn
\section{Proof sketch for section 4}
\subsection{Proof of Lemma 3}
Proof of Lemma 3 follows the same logic as Section 4.4 of \cite{scharf1991statistical}. It is also adopted from Section A.2 Proof of Lemma 4 
 in \cite{wang2024badgdunifieddatacentricframework}. However, here we calculate the absolute membership advantage instead of the trade-off function as in \cite{wang2024badgdunifieddatacentricframework}. We reinstate the proof here for readers' reference and change the final step according to our need.

We use binary hypothesis testing considering distributions of $\Delta w=w_1-w_0$ of $D_0$ and $D_1$, denoted as $\Delta w(D_0)$ and $\Delta w(D_1)$,  as defined in Section 3.2 to calculate FNR and TNR, therefore calculating membership advantage of v.
    \begin{equation}
    H_0: \Delta w(D_0) \sim N(\mu(D_0),\sigma_\gamma^2I),   H_1: \Delta w(D_1) \sim N(\mu(D_1),\sigma_\gamma^2I)
    \end{equation}
    Define $W=(\sigma_\gamma^2I)^{-1}(\mu(D_1)-\mu(D_0))$, then log likelihood ratio has the form 
    \begin{equation}
        L(\Delta w)=W^T \Delta w
    \end{equation}
    by equation (4.29) in \cite{scharf1991statistical}. Equivalently, the binary hypothesis testing of (14) can be changed into 
    \begin{equation}
        H_0: L(\Delta w) \sim N(-\frac{d_v^2}{2},d_v^2),   H_1: L(\Delta w) \sim N(\frac{d_v^2}{2},d_v^2)
    \end{equation}
    where 
    \begin{equation}d_v^2=W^T (\sigma_\gamma^2I)W=(\mu(D_1)-\mu(D_0))^T(\sigma_\gamma^2I)^{-1}(\mu(D_1)-\mu(D_0))
    \end{equation}
    is called the signal-to-noise ratio.

    At significance level $\alpha$, which is also the Type I error, we know (16) has Type II error 
    \begin{equation}
    \beta=1-\Phi(\Phi^{-1}(1-\alpha)-d_v).
    \end{equation}
    Since $TNR$ is probability of failing to reject null hypothesis given null hypothesis is true, which is equal to $1-\alpha$, then 
    \begin{equation}
    |Adv(v)|=|TNR-FNR|=|1-\alpha-\beta|=|\Phi(\Phi^{-1}(1-\alpha)-d_v)-\alpha|
    \end{equation}
    as desired.
\subsection{Proof of Lemma 4}

Proof of Lemma 4 follows similar logic as Section B.6 Proof of Lemma 10 in \cite{wang2024badgdunifieddatacentricframework}. However, here the gradient of loss for the deleted dataset $D_1$ changes, so we need to be careful when we do the computation.
We want to calculate
\begin{equation}
    \begin{aligned}
        d_v&=\frac{||\mu(D_1)-\mu(D_0)||_2}{\sigma_\gamma} \\
        &=\frac{||\nabla_w L(w;D_1)-\nabla_w L(w;D_0)||_2}{\sqrt{\gamma}\sigma}
    \end{aligned}
\end{equation}
Apply Lemma 2, we get
\begin{equation}
    d_v=\frac{||\nabla_w L(w;D_0)-\nabla_w l(w;v)||_2}{\sqrt{\gamma(n-1)}\sigma}
\end{equation}
Given $l(w,(x,y))=(y-\langle w,x \rangle)^2$, the gradient is $\nabla_w l(w,(x,y))=-2(y-\langle w,x \rangle)x$, then
\begin{equation}
    \begin{aligned}
          &\nabla_w L(w;D_0)-\nabla_w l(w;v)\\
        &=\nabla_w \frac{1}{n}\sum_{i=1}^n(y_i-\langle w,x_i \rangle)^2-\nabla_w(y_v- \langle w,x_v \rangle)^2\\
        &=\nabla_w [\frac{1}{n}\sum_{i=1}^ny_i^2-w^T(\frac{2}{n}\sum_{i=1}^ny_ix_i)+w^T(\frac{1}{n}\sum_{i=1}^nx_ix_i^T)w]-\nabla_w[y_v^2-2w^Ty_vx_v+w^Tx_vx_v^Tw]\\
        &=\nabla_w[\frac{1}{n}\sum_{i=1}^ny_i^2-y_v^2]-2\nabla_w[w^T(\frac{1}{n}\sum_{i=1}^ny_ix_i-y_vx_v)]+\nabla_ww^T[\frac{1}{n}\sum_{i=1}^nx_ix_i^T-x_vx_v^T]w\\
        &=2(y_vx_v-\frac{1}{n}\sum_{i=1}^ny_ix_i)+2[\frac{1}{n}\sum_{i=1}^nx_ix_i^T-x_vx_v^T]w\\
        &=2(y_vx_v-S_{yx})+2[S_{xx}-x_vx_v^T]w
    \end{aligned}
\end{equation}
where $S_{yx}=\frac{1}{n}\sum_{i=1}^ny_ix_i,S_{xx}=\frac{1}{n}\sum_{i=1}^nx_ix_i^T$.
Then (21) can be written as 
\begin{equation}
    \begin{aligned}
        d_v&=\frac{||2(y_vx_v-S_{yx})+2[S_{xx}-x_vx_v^T]w||_2}{\sqrt{\gamma(n-1)}\sigma}\\
        &=\frac{||(y_vx_v-S_{yx})+[S_{xx}-x_vx_v^T]w||_2}{\sqrt{\frac{\gamma(n-1)}{2}}\sigma}\\
        &=\frac{||y_vx_v-S_{yx}+S_{xx}w-x_vx_v^Tw||_2}{\sqrt{\frac{\gamma(n-1)}{2}}\sigma}
    \end{aligned}
\end{equation}
\section{Proof sketch for section 5}
\subsection{Proof of Lemma 5}
Since $L(w;D_1)=\frac{n}{n-1}L(w;D_0)-\frac{1}{n-1}l(w;v)$, so we only need to focus on $L(w;D_1)-L(w;D_0)=\frac{1}{n-1}L(w;D_0)-\frac{1}{n-1}l(w;v)=\frac{1}{n-1}L(w;D_0)-\frac{1}{n-1}[y_{v}-\langle x_{v},w \rangle]=\frac{1}{n-1}L(w;D_0)-\frac{1}{n-1}[y_{v}-x_{v}^Tw]$, which is clearly non-negative because $L(w;D_0) > l(w;v)$.
    Based on equation (23), we know
    \begin{equation}
        \begin{aligned}
            (d_v)^2&=(\frac{||y_{v}x_{v}-S_{yx}+S_{xx}w-x_{v}x_{v}^Tw||_2}{\sqrt{\frac{\gamma(n-1)}{2}}\sigma})^2\\
            & \geq \frac{2}{\sigma^2\gamma(n-1)}[||y_{v}x_{v}-x_{v}x_{v}^Tw||_2-||S_{yx}-S_{xx}w||_2]^2 \\
            &=\frac{2}{\sigma^2\gamma(n-1)}[||x_{v}||_2 \cdot ||y_{v}-x_{v}^Tw||_2-||S_{yx}-S_{xx}w||_2]^2.
        \end{aligned}
    \end{equation}
    The inequality is obtained from Triangle Inequality for $l_2$ norm.
    From Definition 7, we also know 
    \begin{equation}
        d_v = \epsilon_v+2\Phi^{-1}(1-\alpha)
    \end{equation}
    So we have
        \begin{center}
            $[\epsilon_v+2\Phi^{-1}(1-\alpha)]^2 \geq \frac{2}{\sigma^2\gamma(n-1)} [||x_{v}||_2 \cdot l(w,v)-||S_{yx}-S_{xx}w||_2]^2$
        \end{center}
        \begin{center}
            $[\epsilon_v+2\Phi^{-1}(1-\alpha)]^2\frac{\sigma^2\gamma(n-1)}{2} \geq [||x_{v}||_2 \cdot l(w,v)-||S_{yx}-S_{xx}w||_2]^2$
        \end{center}
        \begin{center}
            $-[\epsilon_v+2\Phi^{-1}(1-\alpha)]\frac{\sigma}{||x_{v}||_2}\sqrt{\frac{\gamma(n-1)}{2}}+\frac{||S_{yx}-S_{xx}w||_2}{||x_{v}||_2} \leq l(w;v) \leq [\epsilon_v+2\Phi^{-1}(1-\alpha)]\frac{\sigma}{||x_{v}||_2}\sqrt{\frac{\gamma(n-1)}{2}}+\frac{||S_{yx}-S_{xx}w||_2}{||x_{v}||_2}$
        \end{center}
    Multiplying by $\frac{-1}{n-1}$ and adding $\frac{1}{n-1}L(w;D_0)$ on both sides of the inequality, we get the first part of Lemma 5.

    For part 2 of Lemma 5, we further write equation (24) as
    \begin{equation}
        \begin{aligned}
            (d_v)^2& \geq \frac{2}{\sigma^2\gamma(n-1)}[||x_{v}||_2 \cdot ||y_{v}-x_{v}^Tw||_2-||S_{yx}-S_{xx}w||_2]^2\\
            & \geq \frac{2}{\sigma^2\gamma(n-1)}[B \cdot ||y_{v}-x_{v}^Tw||_2-||S_{yx}-S_{xx}w||_2]^2
        \end{aligned}
    \end{equation}
    since $||x_v||_2 \geq B$ by assumption. Then following the same logic of the first part will we get the result as shown in part 2.
\subsection{Proof of Lemma 6}
Apply Lemma 1 and 7, we get
    \begin{center}
       $\Phi(\Phi^{-1}(1-\alpha)-d)-\alpha \leq e^{\epsilon}-1$
    \end{center}
    So
    \begin{equation}
        \begin{aligned}
            \epsilon &\geq ln[\Phi(\Phi^{-1}(1-\alpha)-d)+1-\alpha]
        \end{aligned}
    \end{equation}
    For point $v$, we know $d_v=\epsilon_v+2\Phi^{-1}(1-\alpha)$. So the lower bound for privacy budget $\epsilon$ after deleting perfect deleted point is 
    \begin{equation}
        \begin{aligned}
            ln[\Phi(\Phi^{-1}(1-\alpha)-d)+1-\alpha]&=ln[\Phi(\Phi^{-1}(1-\alpha)-\epsilon_v-2\Phi^{-1}(1-\alpha))+1-\alpha]\\
            &=ln[\Phi(\Phi^{-1}(\alpha)-\epsilon_v)+1-\alpha].\\
        \end{aligned}
    \end{equation}
    Since we assume the privacy budget to be non-negative, we take the $max$ of equation (28) and 0 to obtain the desired result.
\end{document}